\newcommand*{\email}[1]{\href{mailto:#1}{\nolinkurl{#1}}} 
\newtheorem{definition}{Definition~}
\newtheorem{lemma}{Lemma~}
\newtheorem{notation}{Notation~}
\def\abstract{{\begin{center}
\Large {\bf Abstract}
\end{center} }}
\bmdefine\bchi{\bm\chi}
\title{BINARY SEARCH AND FIRST ORDER GRADIENT BASED METHOD FOR STOCHASTIC OPTIMIZATION }
\date{}
\author{ Vijay Pandey\thanks{Completed master's degree in Computer Science and Engineering, from Indian Institute of Technology Kharagpur, India}\\ \email{vijayiitkgp13@gmail.com} 
}
\begin{document}
\maketitle
% \abstract 
\begin{abstract}
\begin{adjustwidth}{1.4cm}{1.4cm}
In this paper, we present a novel stochastic optimization method, which uses the binary search technique with first order gradient based optimization method, called Binary Search Gradient Optimization (BSG) or \textit{BiGrad}. In this optimization setup, a non-convex surface is treated as a set of convex surfaces. In BSG, at first, a region is defined, assuming region is convex. If region is not convex, then the algorithm leaves the region very fast and defines a new one, otherwise, it tries to converge at the optimal point of the region. In BSG, core purpose of binary search is to decide, whether region is convex or not in logarithmic time, whereas, first order gradient based method is primarily applied, to define a new region. In this paper, Adam is used as a first order gradient based method, nevertheless, other methods of this class may also be considered. In deep neural network setup, it handles the problem of vanishing and exploding gradient efficiently. We evaluate BSG on the MNIST handwritten digit, IMDB, and CIFAR10 data set, using logistic regression and deep neural networks. We produce more promising results as compared to other first order gradient based optimization methods. Furthermore, proposed algorithm generalizes significantly better on unseen data as compared to other methods.
\end{adjustwidth}
\end{abstract}
\section{Introduction}
Convex optimization is a sub-field of mathematical optimization that studies the problem of minimizing convex functions over convex sets. In general, no analytical formula for the solution of convex optimization problems is present, however there are very effective methods available to solve them. It is reasonable to expect that solving general convex optimization problems will become a technology within a few years. Moreover, to solve convex optimization problem, class of  first order iterative optimization algorithm has been proposed. Gradient descent (GD) is one of them. GD finds local minimum of a differentiable function. It has the application in large scale  optimization \cite{bottou2018optimization}. Originally, GD was known as a convex optimization algorithm. Nevertheless in recent years it gained significant focus as a non-convex optimization algorithm as well, with increased popularity of deep neural networks, as the cost function of deep neural networks are non-convex in nature. With increasing popularity of deep neural networks and demand of AI solutions, significant amount of researches are being carried out to accelerate the progress in the the field of non-convex optimization. There are various optimization methods which are extensively used in practice, showing the remarkable performance. Few of these are the variants of GD which includes but not limited to, stochastic gradient descent (SGD) \cite{robbins1951stochastic}, adaptive learning methods, momentum methods \cite{sutskever2013importance,momentum_term}, RMSProp \cite{rmsprop_lecture}, and nesterov  methods. SGD is widely used in deep learning and classical machine learning for cost function optimization, with producing excellent results. Afterwards, discovery of momentum based and nesterov based optimization methods enhanced the optimization performance. Adaptive learning methods came into existence where learning rate is managed by algorithm itself in accordance with the training. Adam is based on adaptive learning, and it showed significant improvement in this direction \cite{kingma2014Adam}. Adding nesterov method in Adam showed better performance on some set of problems \cite{Dozat2016IncorporatingNM}. Various other approaches, incorporating the adaptive learning methods came into practice \cite{adagrad,zeiler2012adadelta,shazeer2018adafactor,ward2018adagrad,reddi2019convergence, rmsprop, zhou2018convergence}. To get more out of these methods, better estimates of hyperparameters such as learning rate, decay constant, momentum coefficient, etc., play crucial role in terms of faster and better convergence. One major challenge in non-convex optimization is handling of saddle point, the point where for some dimensions it is minima while for some other it is maxima. More formally, condition for existence of saddle point is, if eigen values of hessian matrix of loss function corresponding to its parameters, include some positive values and some negative values. Lot of research have been carried out in overcoming this problem \cite{jin2017escape, fang2019sharp, levy2016power, daneshmand2018escaping, dauphin2014identifying, ge2015escaping, reddi2017generic, mokhtari2018escaping, daneshm2018escaping, xu2017firstorder}.
\paragraph{}
Proposed novel optimization method BSG shows state of the art convergence speed as compared with other contemporary algorithms, and it achieves better generalization on unseen data as well. BSG is influenced by SGD and binary search technique. In SGD principal focus is to avoid the overshooting of the target i.e., local minima while training. However, in proposed approach, problem of overshooting is not of main concern. In BSG, sign of gradient plays much crucial role as compared to the gradient magnitude. Moreover, it also overcomes the vanishing gradient and exploding gradient problem at some extent.

\section{Binary Search Gradient Optimization}
\subsection{Preliminaries}
\begin{notation}\label{notation}
Here we define some notations which will be used in remaining part of the paper. $x_{t+i}$: value, variable $x$ holds at time $t+i$; $L$: Loss function; $g(x):L'(x)$; $g^+(x):g(x)>0$; $g^-(x):g(x)<0$; $g^0(x):g(x)=0$; $A$: first order gradient based optimization method (Adam is used as $A$ in this paper); $w$: weight; $n$: negative gradient boundary ($g^-(n)$); $p$: positive gradient boundary ($g^+(p)$); $u$: calculated update value by $A$ ($w_{t+1}=w_t - u_t$); $S:[n,p]$ is the interval to search for $w$.
\end{notation}
\begin{definition}\label{convex}
A function $f:\Re^d \rightarrow$ is convex if for all $x, y \in \Re^d$, for all $\lambda \in [0,1]$,
\begin{center}
    $\lambda f(x)+(1-\lambda)f(y) \geq f(\lambda x + (1-\lambda)y)$
\end{center}
\end{definition}
\begin{definition}\label{optimum}
If a function $f:\Re^d \rightarrow$ is convex, for $x\in \Re^d$, if $f'(x)=0$, then at $x$ local minima of $f$ exists.
\end{definition}
\begin{definition}\label{convex_zero}
In strictly convex function $f:\Re^d \rightarrow$, if for all $n, p, w\in \Re^d$,  $f'(n)<0$ and $f'(p)>0$, then there must exist a point $w$, where $f'(w)=0$ and which satisfies,
\begin{center}
$f'(n) < f'(w) < f'(p)$.
\end{center}
\end{definition}
\begin{definition}\label{convex_sorted}
In strictly convex function $f:\Re^d \rightarrow$, if for all $n, p, w\in \Re^d$, $f'(n)<0$ and $f'(p)>0$, then all values from $f'(n)$ to $f'(p)$ and $n$ to $p$ are in ascending order.
\end{definition}

\begin{definition}\label{bs}
\textbf{Binary Search}\cite{binary_search}: Given an array $A$ of $n$ elements with values or records $A_{0},A_{1},A_{2},\ldots ,A_{n-1}$ in sorted order such that $ A_{0}\leq A_{1}\leq A_{2}\leq \cdots \leq A_{n-1}$, and target value $T$. Binary search finds $T$ by comparing it with an element in the middle of the array. If $T$ matches the element, its position in the array is returned. If $T$ is less than the element, the search continues in the lower half of the array. If $T$ is greater than the element, the search continues in the upper half of the array. This iterative procedure keeps track of the search boundaries and returns the index of $T$ if found. Time complexity of Binary Search is $ O(Log(n))$.
\end{definition}
\begin{lemma}\label{principal}
In strictly convex function $f:\Re^d \rightarrow$, for $n, p, w\in \Re^d$, such that $f'(n)<0$, $f'(p)>0$, and $|[f'(n),f'(p)]|=t$ then there exists a $w$, which satisfies $f'(w)=0$, and can be found in $O(Log(t))$ time.
\end{lemma}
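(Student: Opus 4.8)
The plan is to reduce this lemma to a direct application of binary search (Definition \ref{bs}) by showing that the hypotheses place us exactly in the ``sorted array plus target value'' situation. First I would use strict convexity of $f$ to observe that $f'$ is strictly increasing along the segment from $n$ to $p$; combined with $f'(n) < 0$ and $f'(p) > 0$, Definition \ref{convex_zero} then guarantees the existence of a point $w$ on this segment with $f'(w) = 0$ and $f'(n) < f'(w) < f'(p)$. This settles the existence half of the statement, and pins down that the zero crossing lies strictly between the boundaries $n$ and $p$.

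For the complexity half, I would invoke Definition \ref{convex_sorted}: between $n$ and $p$ the arguments and the corresponding derivative values are simultaneously in ascending order, so the $t = |[f'(n),f'(p)]|$ candidate points, indexed by increasing argument, form a sorted array in the sense required by Definition \ref{bs}. Taking the target value to be $T = 0$, binary search evaluates $f'$ at the midpoint: if it is zero we return that point, if it is negative we recurse on the upper half, and if it is positive we recurse on the lower half. Correctness of each branching decision is immediate from monotonicity of $f'$ (the zero must lie on the side where the sign still changes), and the stated $O(\log(t))$ bound for binary search gives the running time.

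The step I expect to require the most care is the passage from the continuum to a finite sorted array: the interval $[f'(n),f'(p)]$ is uncountable, so ``$|[f'(n),f'(p)]| = t$'' has to be read as the size of a discretization of the search interval $S = [n,p]$ — for instance the grid of update points the optimizer $A$ can actually produce between the two boundaries — and I would make this discretization explicit before asserting that the array is finite and sorted. Once that is fixed, the only remaining thing to verify is that the loop invariant ``$f'$ changes sign across the current search boundaries'' is preserved at every step, which again follows at once from strict monotonicity of $f'$, so that binary search never discards the subinterval containing $w$.
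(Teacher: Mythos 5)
Your proposal follows essentially the same route as the paper's own proof: existence of $w$ from Definition \ref{convex_zero}, sortedness from Definition \ref{convex_sorted}, and the $O(\log t)$ bound by treating the values between $f'(n)$ and $f'(p)$ as a sorted array and applying binary search per Definition \ref{bs}. Your explicit handling of the continuum-to-finite-array discretization is a point the paper silently glosses over (it simply posits an array of cardinality $t$ with $A[i]=f'(i)$), so your version is, if anything, slightly more careful on that step.
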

\begin{proof}
We use binary search as per definition \ref{bs}, in proving this lemma. Let assume there is an array $A$ of cardinality $|A|=t$, where $A[i]=f'(i)$, having $f'(n)$ as first element and $f'(p)$ as last element. As per definition \ref{convex_sorted}, $A$ will be in ascending order. Here $f'(w)=0$ and $T=f'(w)$, therefore as per definition \ref{bs}, $w$ can be found in $O(Log(t))$ time.
\end{proof}
\subsection{Algorithm}

\begin{algorithm}[H]
\caption{Computing BSG update at time $t$}
\begin{algorithmic}[1]\label{algo}
\setstretch{0.02}
\REQUIRE Interval factor $\alpha$
\REQUIRE Initial parameter $x_1$
\STATE Initialize negative gradient boundary $n_0\leftarrow 100$
\STATE Initialize positive gradient boundary $p_0 \leftarrow 0$
% \STATE Initialize flag $r\leftarrow 0$
\STATE Initialize timestep $t\leftarrow 0$
\WHILE{$x_t$ not converged}
\STATE $t\leftarrow t+1$
\STATE Compute gradient $g_t$
\STATE Compute $u_t$ using $A$
\STATE $s \leftarrow n_{t-1} - p_{t-1} + |u_t|$
\IF{$s>0$} 
\STATE $r \leftarrow 1$
\ELSE
\STATE $r \leftarrow 0$
\ENDIF
\IF{$g_t\leq 0$}
\STATE $n_t \leftarrow x_t-u_t$ 
\STATE $p_t \leftarrow p_{t-1}*(1-r) + (n_t -\alpha* u_t)*r$
\ELSE
\STATE $p_t \leftarrow x_t-u_t$
\STATE $n_t \leftarrow n_{t-1}*(1-r) + (p_t -\alpha* u_t)*r$ 
\ENDIF
\STATE $x_{t+1} \leftarrow \frac{n_t+p_t}{2}$
\ENDWHILE
\end{algorithmic}
\end{algorithm}

\subsection{Intuitive Idea}
In this subsection, we will understand the core idea behind the proposed method. In classical machine learning, surface of cost function is convex in nature. However in deep learning, surface of cost function is very complex and non-convex in nature. First understand the working of BSG from the convex surface perspective, then generalize it for non-convex surfaces. As per definition \ref{optimum}, in convex setting, given that $f:\Re^d \rightarrow$ is a convex function, it is desired to find a point $w$, where $f'(w)=0$. Lemma \ref{principal} suggests that point $w$ can be found in logarithmic time. Major challenge is to know two points, $n, p\in \Re^d$ beforehand, which satisfies $f'(n)<0$ and $f'(p)>0$. If we are provided with $n$ and $p$ at first, then task of finding $w$ seems much easier. Intuitively, proposed method can be thought as treating the error surface as a set of convex surfaces. First define an area assuming it is convex, and search for optimal point in that. $n_t>p_t$ is the condition when defined region is not convex. In such a scenario, define other area using $n_t, p_t, u_t,$ and interval factor $\alpha$, adjacent to the current one, and again follow the same process. Keep iterating the process, until defined region is convex. Once convex area is found, algorithm starts converging in that area to find the local minima. BSG choose two points $n$ and $p$, which satisfies $n<p$, and assumes $f'(n)<0$ and $f'(p)>0$. If assumption falls right, then $f'(w) \in [f'(n), f'(p)]$, otherwise, new $n$ and $p$ will be selected. This process iterates until points $n$ and $p$, are found, where $f'(n)<0$ and $f'(p)>0$.
\subsection{Working of Algorithm}
In previous subsection, we have discussed how binary search is applied in convex optimization to reach the local minima. In this subsection, we will discuss the significance of updated value $u$, produced by $A$ in great detail. After subtracting $u_t$ with $n_t$ or $p_t$, size of $S_t$ contracts faster in right direction. Major significance of subtracting $u$ by either $n$ and $p$ is, it can be known, whether $g^0(x)$ exists in $S$ or not. To understand it better, consider $u_t$ is not introduced, then $S_t$ will only be shrunk in each further iteration, because it assumes that $n_t$ and $p_t$ satisfies $g_t^-(x)$ and $g_t^+(x)$ respectively, which is not true. This behaviour results in constraining the optimization algorithm, to find the parameter only in the limited space, where $g_t^0(x)$ may not exist. Incorporating $u$, facilitates in ensuring whether $g^0(x) \in S$ or $g^0(x) \not{\in} S$. If $g^0(x) \in S$, $u$ will eventually attain very small value after few iterations, or else, $u$ will be still large and will cause the algorithm to reach a point, where $p<n$. This condition reflects that $g^0(x) \not{\in} S$, and as a result, new $S$ will be defined accordingly. $|S|$ depends on $\alpha$ and $u$. Small value of $u_t$ reveals, $g_t^0(x)$ is near to $w_t$, and which contributes in small $|S_t|$, conversely large value of $u_t$ suggests, $g^0(x)_t$ is far situated to $w_t$, and in accordance, large $|S_t|$ is obtained. Small value of $u_t$ also suggests that, $w_t$ is in the flat region, where convergence stops or becomes very slow. It may be the local maxima as well. However, BSG does not stop even at $g_t^0(w)$. It always contracts the $S_t$, assuming $g_t^+(p)$ and $g_t^-(n)$, and tends to find a point $g_t(w) \approx 0$ but not $g_t(w) = 0$ . By this way, it neither stops at saddle point nor at flat region.
\paragraph{}
Now consider, distinct scenarios where formation of new $S$ takes place. Given, $sgn(u_t)=-1$, $n_{t+1}=w_{t+1}-u_{t+1}$ and $p_{t+1}=p_t$. $w_{t+2}=(n_{t+1}+p_{t+1})/2$. For $g_{t+2}^+(w)$, assign $p_{t+2}=w_{t+2} - u_{t+2}, n_{t+2}=n_{t+1}$, and for $g_{t+1}^-(w)$, update $n_{t+2}=w_{t+2}, p_{t+2}=p_{t+1}$. Here case may arise, where $n_{t+2}>p_{t+2}$. This situation occurs in two scenarios. One is, when $w_{t+1}$ lands on the negative slope of another convex region $C \not{\in} S$. This is the case where algorithm starts finding the local maxima. Second scenario is when either $sgn(g'(p_{t+2})) \neq 1 $ or $sgn(g'(n_{t+2})) \neq -1$. Both of the above situations are not desirable. Therefore, to avoid this, we ensure that $n < p$ must always holds, and it is implemented by creating a new $S$ appropriately, whenever above condition is encountered. New $S_t$ is created based on the sign of $g(w_t)$. For $g^-(w_t)$ or $g^0(w_t)$, add $\alpha*|u_t|$ to $w_t$, and assign it to $p_{t}$, otherwise, subtract $\alpha*|u_t|$ to $w_t$ and populate it to $n_{t}$.

\subsection{Salient Features}

In this section, we will discuss the salient aspects of the proposed approach. BSG is highly stochastic in nature, therefore, smaller batch sizes add more stochastic behaviour in it. Very large batch size slows down the learning. BSG works well on average batch size. Average batch size results in, better finding of convex region in non-convex loss surface. However, irrespective of the batch size, algorithm converges in the end. BSG works fairly well with batch normalization as well.
\subsubsection{Handles Vanishing and Exploding Gradient Problem}
It handles the vanishing and exploding gradient problem in deep neural networks. In BSG, once convex region is found, weight update much depends on binary search method, which does consider gradient sign rather than gradient magnitude. Vanishing gradient causes smaller value of $u$, which suggests that descent speed becomes slow. BSG tends to attain the gradient $g(w) \approx 0$. Therefore, slow descent motion is accelerated towards bottom of the valley. Whereas, exploding gradient causes larger value of $u$, results in risk of overshooting. BSG pulls back the weight to achieve $g(w) \approx 0$. From above analysis, it can be seen that, BSG handles vanishing and exploding gradient issue in much efficient way.
\subsubsection{Convergence}
One notable thing with proposed approach is, it does not stop at $g_t(x)=0$. It tries to achieve $g_t(x) \approx 0$, but never converges at $g_t(x)=0$. It helps in avoiding the saddle point. BSG converges, when it is in a valley surrounded by hills having negative slope and positive slope. One notable result is, it finds the minima, where $g_t^+(p)$, $g_t^-(p)$ are met, and value of $u_t$ is infinitesimally small. In this case, there is no more movement for parameters. Proposed optimization method, helps in finding the better trajectory by imposing more than one condition. Desired minima is not met, until above mentioned convergence conditions are satisfied. Parameters resulting using BSG are more robust, and it results in more promising and better minima. 

\subsubsection{Provides Regularization}
One of the advantage with current approach is, adding noise (perturbation) in the gradient, which helps to achieve better generalization \cite{gradient_noise}. In BSG, weight update is result of division of $p+n$ by 2 which indirectly perturbs the gradient, because, convergence step does not only depend on the gradient magnitude. However, adding noise sometimes lead to slower training, nevertheless, in case of BSG, training speed is not get affected. Regularization effect added by BSG helps model to achieve better generalization on test data. 

\subsection{Effect of Interval Factor on Learning}
Several factors are needed to consider, while setting interval factor $\alpha$. Let there are $w_t$, $u_t$, and $g_t^-(w)$. Since, $n_t = w_t - u_t$ and $p_t = p_{t-1}$, therefore, $w_{t+1}=(w_t- u_t + p_t)/2$. $\delta w = w_{t+1}-w_t$, $ \delta w = (p_t-w_t-u_t)/2$, $ \delta w=((p_t -n_t) -2*u_t)/2$. With $\delta w$ update, there may be a case with $w_{t+1}$ that, it may skip the current valley and lands in another valley. Nonetheless, this event is rare and can happen, if $|p_t - n_t|$ is very high. Nevertheless, this event is not of much problem, and anyways algorithm converges. However, this event may lead to less smoother convergence. On the other hand, if $|p_t - n_t|$ is very low, then it behaves almost same as the algorithm $A$. Therefore, it is good practice to keep value of $\alpha$ neither very large nor very low. One key thing to discuss, related to learning rate of $A$, as part of BSG. We usually prefer low learning rate to not overshoot the bottom of the valley, however, in BSG its not the matter of concern. Moreover, having high learning rate helps in finding the valley very fast, nevertheless, it may cause less smoother convergence. Major role of learning rate in BSG, is to find the valley, and not more for searching the bottom in the valley. Once valley is found, binary search helps in constraining the parameter update, confined within the defined convex region only.

\section{Experiment}
To investigate the convergence and performance of the proposed optimization algorithm, we empirically evaluate the BSG and compare it with various optimization algorithms. Most of the experimental setup in this section, is on the similar line with the previous papers on stochastic optimization. We run experiments on logistic regression, fully connected neural network and convolutional neural network to evaluate the performance. In experimental setup, used Adam, as part of $A$ and applied this with BSG.

\subsection{Logistic Regression on MNIST}
In this experiment, we use logistic regression, having convex cost surface, on the MNIST handwritten digit dataset. Softmax activation and cross-entropy loss is used. We run the experiment for various optimizers, with default values of their hyperparameters.
\begin{figure}[H]
\centering
% \captionsetup{justification=centering, margin=0.2cm}
\begin{minipage}[H]{.5\textwidth}
  \centering
  \includegraphics[width=1.0\linewidth]{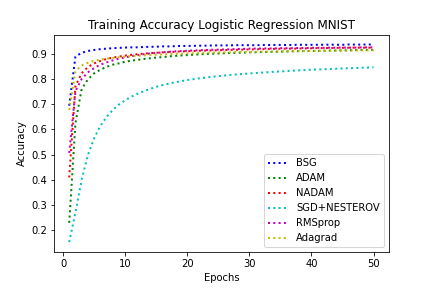}
%   \captionof{figure}{A figure}
\caption{Training Accuracy of Logistic Regression on MNIST}
  \label{fig:test1}
\end{minipage}%
\begin{minipage}[H]{.5\textwidth}
  \centering
  \includegraphics[width=1.0\linewidth]{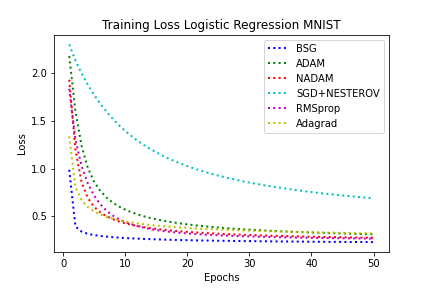}
  \caption{Training Loss of Logistic Regression on MNIST}
  \label{fig:test2}
\end{minipage}
% \captionof{figure}{two figures}
\end{figure}

\begin{figure}[H]
\centering
% \captionsetup{justification=centering, margin=0.2cm}
\begin{minipage}[H]{.5\textwidth}
  \centering
  \includegraphics[width=1.0\linewidth]{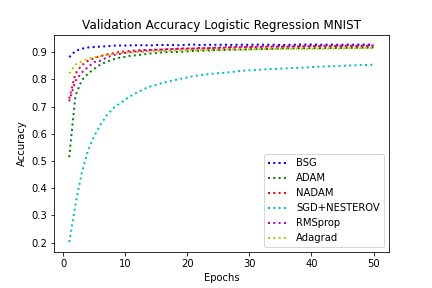}
%   \captionof{figure}{A figure}
\caption{Validation Accuracy of Logistic Regression on MNIST}
  \label{fig:test3}
\end{minipage}%
\begin{minipage}[H]{.5\textwidth}
  \centering
  \includegraphics[width=1.0\linewidth]{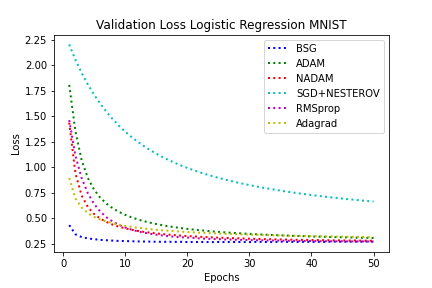}
  \caption{Validation Loss of Logistic Regression on MNIST}
  \label{fig:test4}
\end{minipage}
% \captionof{figure}{two figures}
\end{figure}
As from figure \ref{fig:test1}, it can be seen that, BSG shows state of the art convergence speed. In few iterations, it attains the highest training  accuracy. BSG demonstrates superior convergence speed as well as lowest loss (figure \ref{fig:test2}), as compared to the other optimizers. Figure \ref{fig:test3} and figure \ref{fig:test4} depicts the performance of BSG, in terms of validation accuracy and validation loss respectively. Validation curve follows the same trend as training curve, and achieves better result than other optimizers. Significant difference between convergence speed of BSG and other optimizers is clearly visible in this experiment.

\subsection{Logistic Regression on IMDB}
We use IMDB movie review dataset to evaluate the performance of BSG on sparse data. We keep vocabulary size of 10,000, and represent each data point as bag of words vector. Due to bag of words approach, every vector is very sparse. As data is very sparse, dropout (0.5) is added to handle overfitting. Logistic regression is used with sigmoid activation for binary class classification. In addition, binary-cross-entropy is considered as a loss function. 
\begin{figure}[H]
\centering
% \captionsetup{justification=centering, margin=0.2cm}
\begin{minipage}[H]{.5\textwidth}
  \centering
  \includegraphics[width=1.0\linewidth]{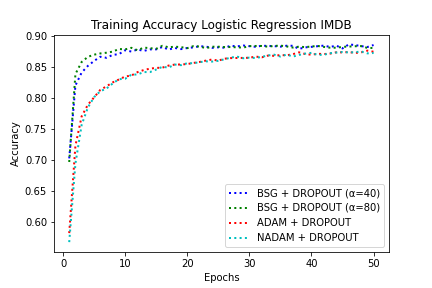}
%   \captionof{figure}{A figure}
\caption{Training Accuracy of Logistic Regression on IMDB}
  \label{fig:test5}
\end{minipage}%
\begin{minipage}[H]{.5\textwidth}
  \centering
  \includegraphics[width=1.0\linewidth]{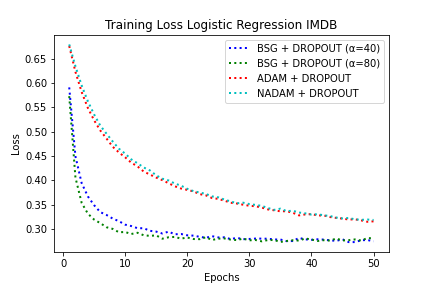}
  \caption{Training Loss of Logistic Regression on IMDB}
  \label{fig:test6}
\end{minipage}
% \captionof{figure}{two figures}
\end{figure}

\begin{figure}[H]
\centering
% \captionsetup{justification=centering, margin=0.2cm}
\begin{minipage}[H]{.5\textwidth}
  \centering
  \includegraphics[width=1.0\linewidth]{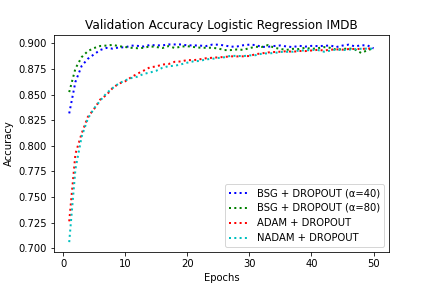}
\caption{validation Accuracy of Logistic Regression on IMDB}
  \label{fig:test7}
\end{minipage}%
\begin{minipage}[H]{.5\textwidth}
  \centering
  \includegraphics[width=1.0\linewidth]{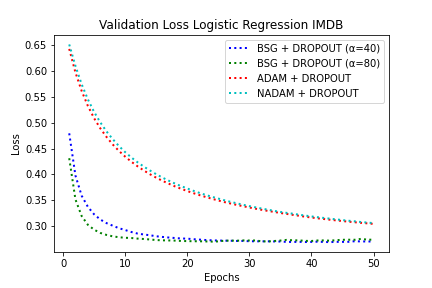}
  \caption{validation Loss of Logistic Regression on IMDB}
  \label{fig:test8}
\end{minipage}
% \captionof{figure}{two figures}
\end{figure}
Training accuracy curve in figure \ref{fig:test5}, and validation accuracy curve in figure \ref{fig:test7}, demonstrates that BSG quickly attains significant accuracy value in less than 10 epochs. The same accuracy value is achieved by other popular algorithms almost after 40 epochs. It shows, the convergence speed of BSG is reasonable better than other methods. One important point to notice regarding interval factor $\alpha$, as the value of $\alpha$ is increased, convergence speed is also increased proportionally.
\subsection{Fully connected neural network on MNIST}
In first two experiments, we evaluated BSG on cost functions, having convex surfaces. In this experimental setup, we use fully connected neural networks (FCNN), using two hidden layers, having 500 and 300 hidden units respectively. \textit{ReLU} activation is applied on both hidden layers. In output layer softmax activation for multi class classification is used and loss function is cross-entropy. Cost function of this network is non-convex in nature. We conduct the experiment by applying batch normalization (BN), and without batch normalization both. 

\begin{figure}[H]
\begin{minipage}[H]{.5\textwidth}
%   \centering
  \includegraphics[width=1.0\linewidth]{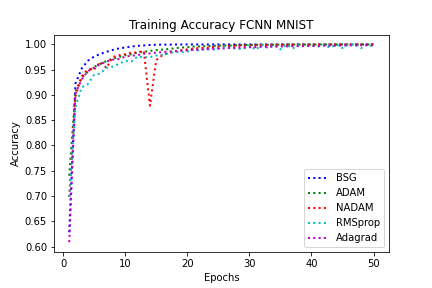}
\caption{Training Accuracy of FCNN on\qquad  MNIST}
  \label{fig:test9}
\end{minipage}%
\begin{minipage}[H]{.5\textwidth}
%   \centering
  \includegraphics[width=1.0\linewidth]{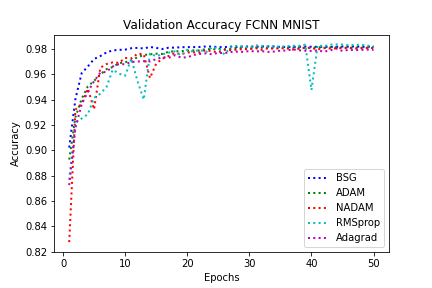}
\caption{Validation Accuracy of FCNN on MNIST}
  \label{fig:test10}
\end{minipage}
\end{figure}

\begin{figure}[H]
\centering
% \captionsetup{justification=centering, margin=0.2cm}
\begin{minipage}[H]{.5\textwidth}
  \centering
  \includegraphics[width=1.0\linewidth]{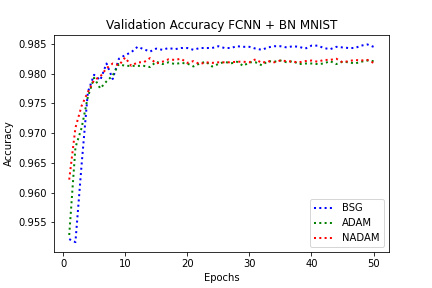}
\caption{Validation Accuracy of FCNN + BN on MNIST}
  \label{fig:test11}
\end{minipage}
\end{figure}
We investigate the performance in both cases. While analysing the performance of BSG on non-convex cost surface, accuracy curves in figure \ref{fig:test9} and in figure \ref{fig:test10} illustrates that, even for the non-convex cost surfaces, BSG apparently dominates the other optimization algorithms, in terms of convergence speed on training data as well as on test data. One observation in figure \ref{fig:test11} is, BSG quickly converges as well as shows, better generalization accuracy as compared with other methods.
\subsection{Convolutional neural network on CIFAR10}
Convolutional neural networks (CNN) are widely used in vision related task. For benchmarking purpose, we apply CNN on the CIFAR10 image dataset. In CNN, two convolutional+maxpooling blocks are used. First convolutional layer consists 64 filters, and (5,5) size kernel while second convolutional layer has 128 filters and (5,5) size kernel. Maxpooling layer of window size (3,3), and stride value as 2 is used in both blocks. CNN blocks are followed by a hidden layer of 300 hidden units. \textit{ReLU} is used as an activation function for all the three hidden layers, whereas, softmax activation for output layer. Cross-entropy is applied as a loss function. Batch normalization method is incorporated in proposed CNN network. While analysing the performance of BSG on the validation set, in figure \ref{fig:test12}, it can be noticed that BSG produces excellent result in terms of accuracy. In validation accuracy plot in figure \ref{fig:test12}, during initial epochs, there is huge gap between the validation curve of BSG and other methods. It reflects the better convergence speed of BSG in terms of validation accuracy. Moreover, BSG converges at the better minima as well, results in better generalization accuracy. Figure \ref{fig:test13}, demonstrates the effect of different interval factor $\alpha$ on the validation curve of BSG. Apparently it can be observed that higher values of $\alpha$ result in higher convergence speed, however validation curve for higher values of $\alpha$ are not that smooth as the validation curve associated with smaller values of $\alpha$ are. 

\begin{figure}[H]
\centering
% \captionsetup{justification=centering, margin=0.2cm}
\begin{minipage}[H]{.5\textwidth}
  \centering
  \includegraphics[width=1.0\linewidth]{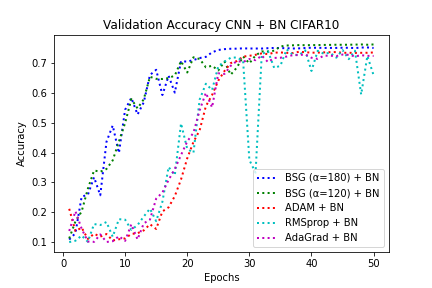}
\caption{Validation Accuracy of CNN + BN CIFAR10}
  \label{fig:test12}
\end{minipage}%
\begin{minipage}[H]{.5\textwidth}
  \centering
  \includegraphics[width=1.0\linewidth]{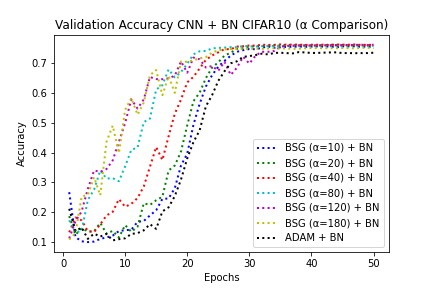}
\caption{Validation Accuracy of CNN + BN CIFAR10 ($\alpha$ Comparison)}
  \label{fig:test13}
\end{minipage}
\end{figure}
\section{CONCLUSION}

In this paper, we introduced a novel algorithm for stochastic optimization, combining binary search as well as first order gradient-based method. Proposed algorithm produces state of the art convergence speed for convex objective functions, however, even for non-convex surfaces, it performs equally well. In experiments, it has been shown that, convergence results produced by BSG, are remarkably well for various classes of dataset. Whether it is high dimensional data, or sparse data, or large data. Implementation of BSG is very straightforward. BSG converges at better minima, resulting reasonable generalization performance. Presented algorithm much depends on the direction of the gradient rather that its magnitude, which helps in overcoming the vanishing and exploding gradient problem in better way. Overall, BSG is well suited to be used in deep leaning as well as classical machine learning, providing fast convergence speed as well as better generalization accuracy.

\bibliographystyle{plain}
\bibliography{ms.bbl}
\end{document}